\theoremstyle{plain}
\newtheorem{theorem}{Theorem}[section]
\newtheorem{lemma}[theorem]{Lemma}
\theoremstyle{definition}
\newtheorem{definition}[theorem]{Definition}
\theoremstyle{remark}
\icmltitlerunning{Sumformer: Universal Approximation for Efficient Transformers}
\newcommand{\R}{\mathbb R}
\newcommand{\N}{\mathbb N}
\newcommand{\Att}{\mathrm{Att}}
\newcommand{\AttHead}{\mathrm{AttHead}}
\newcommand{\Block}{\mathrm{Block}}
\newcommand{\T}{\mathcal{T}}
\newcommand{\norm}[1]{\left\lVert#1\right\rVert}
\newcommand{\e}{\varepsilon}
\newcommand{\w}{\omega}
\newcommand{\X}{\mathcal{X}}
\newcommand{\Y}{\mathcal{Y}}
\newcommand{\Sc}{\mathcal S}
\newcommand{\LinformerHead}{\mathrm{LinAttHead}}
\newcommand{\Ftarget}{\mathcal{F}_{\mathrm{equi}}^n(\X,\Y
)}
\newcommand{\FC}{\mathrm{FC}}
\newcommand{\PerformerHead}{\mathrm{PerAttHead}}
\begin{document}

\twocolumn[
\icmltitle{Sumformer: Universal Approximation for Efficient Transformers}

% It is OKAY to include author information, even for blind
% submissions: the style file will automatically remove it for you
% unless you've provided the [accepted] option to the icml2023
% package.

% List of affiliations: The first argument should be a (short)
% identifier you will use later to specify author affiliations
% Academic affiliations should list Department, University, City, Region, Country
% Industry affiliations should list Company, City, Region, Country

% You can specify symbols, otherwise they are numbered in order.
% Ideally, you should not use this facility. Affiliations will be numbered
% in order of appearance and this is the preferred way.
%\icmlsetsymbol{equal}{*}

\begin{icmlauthorlist}
\icmlauthor{Silas Alberti}{LMU,Stanford}
\icmlauthor{Niclas Dern}{TUM}
\icmlauthor{Laura Thesing}{LMU}
\icmlauthor{Gitta Kutyniok}{LMU}
%\icmlauthor{Firstname5 Lastname5}{yyy}
%\icmlauthor{Firstname6 Lastname6}{sch,yyy,comp}
%\icmlauthor{Firstname7 Lastname7}{comp}
%\icmlauthor{}{sch}
%\icmlauthor{Firstname8 Lastname8}{sch}
%\icmlauthor{Firstname8 Lastname8}{yyy,comp}
%\icmlauthor{}{sch}
%\icmlauthor{}{sch}
\end{icmlauthorlist}

\icmlaffiliation{TUM}{TUM School of Computation, Information and Technology, Technical University of Munich, Munich, Germany}
\icmlaffiliation{LMU}{Department of Mathematics, Ludwig-Maximilians-Universität München, Germany}
\icmlaffiliation{Stanford}{Department of Electrical Engineering, Stanford University, United States}
%\icmlaffiliation{sch}{School of ZZZ, Institute of WWW, Location, Country}

\icmlcorrespondingauthor{Silas Alberti}{salberti@stanford.edu}
%\icmlcorrespondingauthor{Firstname2 Lastname2}{first2.last2@www.uk}

% You may provide any keywords that you
% find helpful for describing your paper; these are used to populate
% the "keywords" metadata in the PDF but will not be shown in the document
\icmlkeywords{}

\vskip 0.3in
]

% this must go after the closing bracket ] following \twocolumn[ ...

% This command actually creates the footnote in the first column
% listing the affiliations and the copyright notice.
% The command takes one argument, which is text to display at the start of the footnote.
% The \icmlEqualContribution command is standard text for equal contribution.
% Remove it (just {}) if you do not need this facility.

\printAffiliationsAndNotice{}  % leave blank if no need to mention equal contribution
%\printAffiliationsAndNotice{\icmlEqualContribution} % otherwise use the standard text.

\begin{abstract}
    Natural language processing (NLP) made an impressive jump with the introduction of Transformers. ChatGPT is one of the most famous examples, changing the perception of the possibilities of AI even outside the research community. However, besides the impressive performance, the quadratic time and space complexity of Transformers with respect to sequence length pose significant limitations for handling long sequences. While efficient Transformer architectures like Linformer and Performer with linear complexity have emerged as promising solutions, their theoretical understanding remains limited. In this paper, we introduce Sumformer, a novel and simple architecture capable of universally approximating equivariant sequence-to-sequence functions. We use Sumformer to give the first universal approximation results for Linformer and Performer. Moreover, we derive a new proof for Transformers, showing that just one attention layer is sufficient for universal approximation.
\end{abstract}

\section{Introduction}

The introduction of the Transformer architecture in 2017 \cite{vaswani_attention_2017} commenced a new revolution in the field of deep learning. It not only revolutionized Natural Language Processing with famous models like BERT \cite{devlin_bert_2019} and GPT-3 \cite{brown_language_2020} but also other areas like computer vision \cite{dosovitskiy_image_2021} and biology \cite{jumper_highly_2021}.

\begin{comment}
    
Unlike residual neural networks (RNNs), Transformers have a global structure, providing them with two significant advantages: 

First, Transformers can remember and relate information that is not locally close. It even improves this capability beyond Long Short-term memory (LSTM) networks \cite{hochreiter_long_1997} or gated RNNs \cite{cho_properties_2014}. 
Second, Transformers can train simultaneously with the entire input sequence. This makes it possible to process the tokens in parallel and scale the model much better.
\end{comment}

However, Transformers can become computationally expensive at scale. In many cases, the primary performance bottleneck is the attention mechanism that needs to compute a $n \times n$-Matrix, where $n \in \N$ is the length of the input sequence. Therefore, the computational complexity of a forward pass grows $O(n^2)$ with the sequence length. This establishes the sequence length as one of the major bottlenecks when using of Transformers for long sequences, which are encountered in many fields, such as NLP for processing longer documents like books, time series \cite{wen_transformers_2023}, genomics \cite{eraslan_deep_2019}, and reinforcement learning \cite{chen_decision_2021}.

To address this problem, many new architectures have been proposed \cite{child_generating_2019,wang_linformer_2020,choromanski_rethinking_2021,katharopoulos_transformers_2020,kitaev_reformer_2020,zaheer_big_2021,tay_efficient_2020,beltagy_longformer_2020}. These can be roughly divided into \emph{sparse Transformers} and \emph{efficient Transformers} \cite{tay_efficient_2020}. In some cases, the complexity can be reduced to as low as $\mathcal{O}(n)$. While, in practice, these new architectures do not match the performance of Transformers, the relative performance to the decrease in computational cost makes them promising.

Besides their empirical performance, little is known about the theoretical properties of these new architectures. Particularly, they have not yet been studied from the perspective of expressivity. This paper shows that the efficient Transformers, Linformer and Performer, are universal approximators of equivariant continuous sequence-to-sequence functions on compact sets.

\subsection{Summary of contributions}

In this paper, we introduce the \emph{Sumformer} architecture. This architecture serves as a simple tool that we can use to investigate the expressive power of Transformers and two selected efficient Transformer architectures: Linformer \cite{wang_linformer_2020} and Performer \cite{choromanski_rethinking_2021}. We chose the latter two architectures since they performed best in the Long Range Arena benchmark \cite{tay2020long}.

First, we show that the Sumformer architecture is able to approximate all continuous equivariant sequence-to-sequence functions on compact sets (Sec. \ref{section_sumformer}). We give two different proofs: A continuous proof based on the algebra of multisymmetric polynomials, and a discrete proof based on a piecewise constant approximation.

Using this result, we give a new proof of the universal approximation theorem for Transformers (Sec. \ref{Sec:TransformerProof}). This proof improves significantly upon the previous result from \cite{yun_are_2020}, by reducing the number of necessary attention layers. Our proof only needs one attention layer, whereas the number of attention layers in \cite{yun_are_2020} grows exponentially with the token dimension.

Based on this proof, we give the first proof that Linformer and Performer are universal approximators (Sec. \ref{Ch:UnivEffTrans}). This is the first universal approximation theorem for efficient Transformers, showing that despite using the efficient attention mechanisms we do not suffer from a loss in expressivity.% Due to the unifying character of Sumformers, it seems likely that they can be used to show that further Transformer architectures and state space models are also universal approximators.

Our numerical experiments (Sec. \ref{Sec:Experiments}) using the Sumformer architecture show that the Sumformer architecture is not only theoretically useful but can indeed be used to learn functions using gradient descent. Furthermore, we find an exponential relation between the token dimension and the necessary latent dimension.

\subsection{Related work}\label{Sec:RelatedWorks}

This paper analyses the expressive power of newly evolving efficient Transformer architectures. Expressivity is a natural first question when investigating the possibilities and limitations of network architectures. Therefore, the question of which functions can be approximated (uniformly) with neural networks and their variance is of great interest. 

The publications mentioned in the following are by no means exhaustive but rather a selection: The first universal approximation result for neural networks dates back to 1989 with the universal approximation theorem in \cite{hornik_multilayer_1989}. Further investigations also for deeper networks were made in \cite{barron1994approximation,mhaskar1996neural,shaham2018provable}. These results were extended to functions with the rectified linear unit (ReLU) activation function in \cite{petersen_optimal_2018, lu_expressive_2017, yarotsky_error_2017, guhring2020error} and convolutional neural networks in \cite{yarotsky2022universal}. Feed forward neural networks with fewer non-zero coefficients and values that can be stored with fewer bits and therefore improve memory efficiency are investigated in \cite{bolcskei_optimal_2017}.   

The Transformer architecture has not been explored as much in the literature. We know from \cite{yun_are_2020} that Transformers are universal approximators in $L_p$, for $1 \leq p < \infty $ for continuous sequence-to-sequence functions. Moreover, it has been shown in \cite{yun_on_2020} that under certain assumptions on the sparsity pattern, sparse Transformers form universal approximators in the same setting. The expressivity of the self-attention mechanism has also been examined from a complexity theory perspective in \cite{likhosherstov_expressive_2021}. For efficient Transformer architectures, no such universal approximation results exist to our knowledge.

The main inspiration for this work is the Deep Sets architecture which shows a universal approximation theorem for invariant functions on sets \cite{zaheer_deep_2018, wagstaff_limitations_2019}. We expand on their theorems in the continuous case (Theorem 7 \& 9) and expand the theory from invariant functions on sets to equivariant functions on sequences. A similar model to Sumformer was proposed, and universality was proven in \cite{hutter_representing_2020}. However, the connection to (efficient) Transformers was not made. We build upon their proof and propose an alternative discontinuous version.
Concurrent work has given the continuous proof in higher dimension, but neither considers the expansion to equivariant sequence-to-sequence functions nor to Transformers \cite{chen_representation_2023}.
\section{Preliminaries}\label{Sec:Preliminaries}

This section describes the setting and states helpful theorems for our proofs and experiments. 

We first recall the definition of \emph{attention heads} and the \emph{Transformer block} from \cite{vaswani_attention_2017}. Afterwards, we describe how they can be changed to be more efficient with Linformer and Performer.

Furthermore, we define \emph{equivariant}, \emph{semi-invariant} functions, \emph{multisymmetric polynomials}, and \emph{multisymmetric power sums} \cite{briand_when_2004}. We also state important theorems about the relations between these concepts from \cite{hutter_representing_2020} and \cite{briand_when_2004}. Lastly, we recall an important theorem from \cite{zaheer_deep_2018}.

\subsection{Transformer}
The central part of the Transformer is the (self-)attention layer, which is able to connect every element in a sequence with every other element.

\begin{definition}[Attention Head \cite{vaswani_attention_2017}]\label{def_atthead}
Let $W_Q,W_K,W_V\in\R^{d\times d}$ be weight matrices and let $\rho:\R^d\rightarrow\R^d$ be the softmax function. A \emph{(self-)attention head} is a function $\AttHead:\R^{n\times d}\rightarrow\R^{n\times d}$ with
\begin{equation}
    \AttHead(X):=\underbrace{\rho\Big((XW_Q)(XW_K)^\top /\sqrt{d}\Big)}_{A}XW_V
\end{equation}
where $\rho$ is applied row-wise. We call $A\in\R^{n\times n}$ the \emph{attention matrix}.
\end{definition}

Computing the attention matrix $A$ has a computational complexity of $\mathcal{O}(n^2)$, thereby forming the highest cost in evaluating the Transformer. 

In the next step, we combine the attention heads to an attention layer by concatenating $h$ attention heads and multiplying them with another weight matrix $W_O$.

\begin{definition}[Attention Layer \cite{vaswani_attention_2017}]\label{def_att}
Let $h\in\N$, let $\AttHead_1,\dots,\AttHead_h$ be attention heads and let $W_O\in\R^{hd\times d}$. A \emph{(multi-head) (self-)attention layer} $\Att:\R^{n\times d}\rightarrow\R^{n\times d}$ is defined as
\begin{align}
\begin{split}
    \Att(X) &:=  [\AttHead_1(X),\dots,\AttHead_h(X)]W_O.
\end{split}
\end{align}
\end{definition}

For the Transformer architecture the attention layer is combined with fully-connected layers that are applied token-wise. Moreover, there are residual connections between all the layers \cite{he_deep_2015}. Those three components together yield the \emph{Transformer block}.

\begin{definition}[Transformer Block \cite{vaswani_attention_2017}]\label{def_block}
A \emph{Transformer block} $\Block:\R^{n\times d}\rightarrow\R^{n\times d}$ is an attention layer $\Att:\R^{n\times d}\rightarrow\R^{n\times d}$ followed by a fully-connected feed forward layer $\FC:\R^{d}\rightarrow\R^{d}$ with residual connections
\begin{equation}
    \Block(X):=X+\FC(X+\Att(X))
\end{equation}
where the fully-connected feed-forward layer $\FC$ is applied row-wise.
\end{definition}

Similar to the concept of feed-forward neural networks, we stack several Transformer blocks after each other by concatenation. The Transformer architecture is then defined as follows.

\begin{definition}[Transformer Network \cite{vaswani_attention_2017}]\label{def_transformer}
Let $\ell\in\N$ and $\Block_1,\dots,\Block_\ell$ be Transformer blocks. A \emph{Transformer network} $\T:\R^{n\times d}\rightarrow\R^{n\times d}$ is a composition of Transformer blocks:
\begin{equation}
  \T(X):=(\Block_\ell\circ\Block_{\ell-1}\circ\dots\circ\Block_1)(X).  
\end{equation}
\end{definition}

\subsection{Efficient Transformer}

To address the $O(n^2)$ bottleneck of computing the attention matrix $A$, various efficient Transformers were introduced. We chose to investigate Linformer and Performer since they stood out in the Long Range Arena benchmark \cite{tay2020long}. Both architectures only replace the attention mechanism and do not change the rest of the architecture.

\subsubsection{Linformer}

The Linformer architecture is motivated by the observation that the attention matrix $A$ is effectively low rank. This is supported by empirical evidence in actual language models and theoretical results in  \cite{wang_linformer_2020}.

The Linformer architecture utilizes the Johnson-Lindenstrauss Lemma by using linear projections $E,F\in\R^{k\times n}$ to project the key and value matrix $K=XW_K$ and $V=XW_V$ from $\R^{n\times d}$ to $\R^{k\times d}$. The entries of $E$ and $F$ are sampled from a normal distribution. The precise definition of a Linformer Attention Head is as follows:

\begin{definition}[Linformer Attention Head \cite{wang_linformer_2020}]\label{def_linatthead}
Let $k\in\N$ with $k<n$ and let $E,F\in\R^{k\times n}$ be linear projection matrices. Furthermore, let $W_Q,W_K,W_V\in\R^{d\times d}$, $\rho:\R^d\rightarrow\R^d$ be as in the Definition of a Transformer attention head \ref{def_atthead}. A \emph{Linformer attention head} is a function $\LinformerHead:\R^{n\times d}\rightarrow\R^{n\times d}$ with
\begin{equation}
    \LinformerHead(X):=%\underbrace{
    \rho\Big((XW_Q)(EXW_K)^\top /\sqrt{d}\Big)%}_{\overline{A}}
    FXW_V
\end{equation}
where $\rho$ is applied row-wise.
\end{definition}

Then, the new attention matrix $\overline{A}=\rho((XW_Q)(EXW_K)^\top /\sqrt{d})$ will be in $\R^{n\times k}$, giving a computational complexity of $\mathcal O(nk)$ instead of $\mathcal O(n^2).$ Using the Johnson-Lindenstrauss Lemma it is shown that when $k$ is chosen on the order of $\mathcal O(d/\e^2)$, the attention mechanism is approximated with $\e$ error.  Since $\mathcal O(d/\e^2)$ is independent of $n$, the complexity of Linformer Attention is $\mathcal O(n)$ as $n$ increases.

\subsubsection{Performer}
The key insight that motivates the Performer architecture is the fact that the attention mechanism could be more efficient if the attention matrix had no non-linearity:
\begin{equation}
    (QK^T)V=Q(K^TV)
\end{equation}
This reduces the computational complexity from $O(n^2d)$ to $O(nd^2)$. By interpreting the attention matrix as a kernel matrix, this non-linearity can be replaced by a dot product in a kernel space, enabling the following efficient attention algorithm:

\begin{definition}[Performer Attention Head \cite{choromanski_rethinking_2021}]\label{def:Performer}
Let $k\in\N$ with $k<n$, let $\w_1,\dots,\w_k\sim\mathcal{N}(0,I_d)$ and define $a:\R^d\rightarrow\R^k$ as
\begin{equation}
    a(x):=\frac{1}{\sqrt{k}}\exp\bigg(-\frac{\norm{x}^2}{2}\bigg)\big[\exp(\w_1^\top x),\dots,\exp(\w_k^\top x)\big].
\end{equation}
Furthermore, let $W_Q,W_K,W_V\in\R^{d\times d}$ be weight matrices. A \emph{Performer attention head} is a function $\PerformerHead:\R^{n\times d}\rightarrow\R^{n\times d}$ with
\begin{equation}
    \PerformerHead(X):=a(XW_Q)\Big(a(XW_K)^\top (XW_V)\Big)
\end{equation}
where $a$ is applied row-wise.
\end{definition}

With this definition, we avoid the computation of the full attention matrix, which reduces the computational complexity from $O(n^2d)$ to $O(nkd).$

\subsection{Equivariant and Semi-Equivariant Functions}

Let $\X$ and $\Y$ be the domain and range of a function, e.g., $\X=\Y=\R^d$ or $\X,\Y=[0,1]^d$ in the compact case. We call an element $X\in \X^n$ a \textit{sequence of $n$ elements} and denote $X=[x_1,\dots,x_n]$. Often, we refer to the elements $x_i$ of the sequence as \textit{points}. In the canonical case $\X \subseteq \R^{d}$, we can represent sequences $X\in\R^{n\times d}$ as matrices. We call functions of type $f:\X^n\rightarrow\Y$ \emph{sequence-to-point} functions.

\begin{definition}[Equivariance]
A sequence-to-point function $f:\X^n\rightarrow\Y$, with $\X,\Y \subset \R^d$ is \emph{equivariant} to the order of elements in a sequence if for each permutation $\pi:[n]\rightarrow[n]$:
\begin{equation}
    f([x_{\pi(1)},\dots,x_{\pi(n)}])=[f_{\pi(1)}(X),\dots,f_{\pi(n)}(X)].
\end{equation}
We write that $f\in\Ftarget$.
\end{definition}

Transformers represent sequence-to-sequence functions, but sometimes it is more convenient to work with sequence-to-point functions. To facilitate that, we recall the concept of a \textit{semi-invariant} function (see: \cite{hutter_representing_2020}).

\begin{definition}[Semi-invariance]\label{def_semiinvariance}
A sequence-to-point function $g:\X^n\rightarrow\Y$ is \emph{semi-invariant} if for each permutation $\pi:[n]\setminus\{1\}\rightarrow [n]\setminus\{1\}$:
\begin{equation}
    g([x_1,x_2,\dots,x_n])=g([x_1,x_{\pi(2)},\dots,x_{\pi(n)}]).
\end{equation}
\end{definition}

In this context, the following insight from [\cite{hutter_representing_2020}, Lemma 10] is important because it enables us to deal with equivariant sequence-to-sequence functions by looking at semi-invariant sequence-to-point functions instead:

\begin{lemma}[Equivalence of Equivariance and Semi-invariance \cite{hutter_representing_2020}]\label{lemma_semiinvariance}
A sequence-to-sequence function $f:\X^n\rightarrow\Y^n$ is equivariant if and only if there exists a semi-invariant sequence-to-point function $g:\X^n\rightarrow\Y$ such that
\begin{align}
\begin{split}
    f&([x_1,\dots,x_n])\\
    &=[g(x_1,\{x_2,x_3\dots\}),g(x_2,\{x_1,x_3,\dots\}),\dots].
\end{split}
\end{align}
\end{lemma}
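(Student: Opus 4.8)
The plan is to prove both implications of the biconditional, using nothing more than the definitions of equivariance and semi-invariance together with a careful choice of permutations. For the backward direction I would assume a semi-invariant $g$ with the stated representation exists and verify equivariance directly. Writing the $i$-th component as $f_i(X) = g(x_i, \{x_j\}_{j \neq i})$, I apply $f$ to a permuted sequence $[x_{\pi(1)}, \dots, x_{\pi(n)}]$ and compute its $i$-th component as $g(x_{\pi(i)}, \{x_{\pi(j)}\}_{j \neq i})$. Since $\pi$ is a bijection, the multiset $\{x_{\pi(j)}\}_{j \neq i}$ equals $\{x_k\}_{k \neq \pi(i)}$, so by semi-invariance of $g$ this component equals $g(x_{\pi(i)}, \{x_k\}_{k \neq \pi(i)}) = f_{\pi(i)}(X)$, which is exactly the defining identity of equivariance.

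For the forward direction I would assume $f$ is equivariant and construct $g$ explicitly as the first output coordinate, namely $g(x_1, \{x_2, \dots, x_n\}) := f_1([x_1, \dots, x_n])$. Two things must then be checked: that this $g$ is well defined and semi-invariant, and that it reconstructs $f$. To prove semi-invariance, I take any permutation $\pi$ of $[n]\setminus\{1\}$ and extend it to a permutation $\sigma$ of $[n]$ with $\sigma(1)=1$. Reading off the first coordinate of the equivariance identity for $\sigma$ gives $f_1([x_1, x_{\pi(2)}, \dots, x_{\pi(n)}]) = f_{\sigma(1)}(X) = f_1(X)$, which is precisely the semi-invariance of $g$. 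To prove reconstruction, for each $i$ I choose a permutation $\tau$ with $\tau(1)=i$ and again read off the first coordinate: $f_1([x_i, x_{\tau(2)}, \dots, x_{\tau(n)}]) = f_{\tau(1)}(X) = f_i(X)$, where the omitted entries range over $\{x_j\}_{j \neq i}$ in the order prescribed by $\tau$. By the semi-invariance just established, the left-hand side equals $g(x_i, \{x_j\}_{j \neq i})$, so $f_i(X) = g(x_i, \{x_j\}_{j \neq i})$ for every $i$, which is the claimed representation.

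Since each step reduces to selecting an appropriate permutation and tracking how it acts on the distinguished first position versus the remaining tail, I do not anticipate a genuine obstacle. The only place demanding care is exactly this bookkeeping: one must keep the anchor coordinate separate from the permuted multiset of remaining points, and it is precisely here that semi-invariance (invariance in all but the first argument) rather than full permutation invariance is both needed and sufficient.
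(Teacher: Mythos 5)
Your proof is correct and complete in both directions; the paper itself does not prove this lemma but imports it from \cite{hutter_representing_2020}, whose argument is essentially the one you give (define $g$ as the first output component, use permutations fixing position $1$ to get semi-invariance, and permutations sending $1\mapsto i$ to recover $f_i$). No gaps.
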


\subsection{Multisymmetric Polynomials}

We discuss two different proofs for the universality of Sumformer. For the continuous proof, we use multisymmetric polynomials, which we introduce now. Our definitions are based on \cite{briand_when_2004}.

\begin{definition}[Multisymmetric Polynomial]\label{definition_multi_poly}
    Let $\X \subset \R^d$. A (real) multisymmetric polynomial in a sequence of length $n$ is a polynomial $p: \X^n \rightarrow \mathbb{R}$ in the variables $x^{(1)}_1, x^{(1)}_2, ..., x^{(n)}_d$ which is invariant in permutations of $x^{(1)}, ..., x^{(n)}$.
\end{definition}

\begin{definition}[Multisymmetric Power Sum]\label{definition_multi_sum}
    A multisymmetric power sum of multidegree $\alpha = (\alpha_1, ..., \alpha_d) \in \N^d \backslash \{0\}$ is a multisymmetric polynomial of the form:
    \begin{equation}
        p_\alpha: \X^n \rightarrow \mathbb{R}, [x^{(1)}, ..., x^{(n)}] \mapsto \sum_{i = 1}^n (x^{(i)})^\alpha
    \end{equation}
    where $(x^{(i)})^\alpha = (x^{(i)}_1)^{\alpha_1} \cdots (x^{(i)}_d)^{\alpha_d}$.
\end{definition}

The multisymmetric power sums are of interest because they can generate any multisymmetric polynomial. The following theorem which follows directly from [\cite{briand_when_2004}, Theorem 3 \& Corollary 5] shows this relationship:

\begin{theorem}[Multisymmetric Power Sums generate Multisymmetric Polynomials]\label{power_sum_generation}
    The real multisymmetric power sums in a sequence of length $n$ with multidegree $|\alpha| \coloneqq \alpha_1 + \dots + \alpha_d \leq n$ generate all real multisymmetric polynomials (in a sequence of length n), i.e. every multisymmetric polynomial p can be represented by 
    \begin{equation}
        p = \sigma(p_{\alpha^{(1)}}, ..., p_{\alpha^{(z)}})
    \end{equation}
    with a (real) polynomial $\sigma$ and the multisymmetric power sums $p_{\alpha^{(1)}}, ..., p_{\alpha^{(z)}}$.
\end{theorem}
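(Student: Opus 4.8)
The plan is to mirror the classical fundamental theorem of symmetric polynomials, where the power sums $p_1,\dots,p_n$ generate all symmetric polynomials in $n$ variables, and to lift it to the $d$-colour (multisymmetric) setting in two stages: first show that the multisymmetric power sums generate the whole algebra with no constraint on the degree, and then cut the degree down to $|\alpha|\le n$.

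For the first stage I would use the basis of \emph{monomial multisymmetric polynomials} $m_\lambda$, indexed by multisets $\lambda=\{\alpha^{(1)},\dots,\alpha^{(r)}\}$ of nonzero multidegrees with $r\le n$ parts (there are only $n$ points). Expanding a product of power sums $p_{\alpha^{(1)}}\cdots p_{\alpha^{(r)}}=\sum_{i_1,\dots,i_r}(x^{(i_1)})^{\alpha^{(1)}}\cdots(x^{(i_r)})^{\alpha^{(r)}}$ and separating the terms with distinct indices from those where indices coincide, one sees that this product equals a positive-integer multiple of $m_\lambda$ plus monomial terms $m_{\lambda'}$ whose index multiset $\lambda'$ has strictly fewer parts (coincidences merge multidegrees). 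This triangular relation, together with the fact that the leading coefficients are nonzero integers (hence invertible over $\R$), lets me solve for every $m_\lambda$ as a polynomial in power sums by induction on the number of parts. Hence the $p_\alpha$ generate, though so far with $|\alpha|$ unbounded.

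The second stage, which I expect to be the main obstacle, is the bound $|\alpha|\le n$. The key device is an auxiliary linear form: introduce indeterminates $u=(u_1,\dots,u_d)$ and set $s_k(u):=\sum_{i=1}^n(u\cdot x^{(i)})^k$. By the multinomial theorem $s_k(u)=\sum_{|\alpha|=k}\binom{k}{\alpha}\,u^\alpha\,p_\alpha$, so the power sums $p_\alpha$ with $|\alpha|=k$ are exactly (nonzero multiples of) the $u$-coefficients of $s_k(u)$. Now treat $y_i:=u\cdot x^{(i)}$ as $n$ scalar variables: the classical Newton identities give, for every $k>n$, a universal polynomial $N_k$ with $s_k(u)=N_k\big(s_1(u),\dots,s_n(u)\big)$ identically in $u$ (this is where characteristic $0$ enters, through the divisions in Newton's identities). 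Matching the coefficient of $u^\alpha$ on both sides and dividing by $\binom{k}{\alpha}\ne0$ expresses each $p_\alpha$ with $|\alpha|=k>n$ as a polynomial in the power sums $p_\beta$ with $|\beta|\le n$.

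Combining the two stages gives the claim: an arbitrary multisymmetric polynomial is first written as a polynomial in the $m_\lambda$ (basis), then in the power sums (first stage), and finally the high-degree power sums are eliminated in favour of those with $|\alpha|\le n$ (second stage), yielding the desired representation $p=\sigma(p_{\alpha^{(1)}},\dots,p_{\alpha^{(z)}})$. The only genuinely nontrivial point is the degree reduction; everything relies on characteristic $0$ only through nonvanishing of multinomial coefficients and the divisions in Newton's identities, which is precisely why the real case is unobstructed and matches \cite{briand_when_2004}.
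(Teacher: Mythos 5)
Your argument is correct, but note that the paper does not actually prove this statement: it is imported wholesale as a consequence of Theorem~3 and Corollary~5 of \cite{briand_when_2004}, so there is no in-paper proof to compare against. What you have written is a sound self-contained reconstruction of the classical characteristic-zero argument that those references ultimately rest on. Your first stage is the standard triangularity of products of power sums against the monomial multisymmetric basis: the product $p_{\alpha^{(1)}}\cdots p_{\alpha^{(r)}}$ equals $c_\lambda m_\lambda$ plus terms indexed by coarser multisets, with $c_\lambda=\prod_\beta(\mathrm{mult}_\lambda(\beta))!$ a positive integer, so induction on the number of parts (and the vanishing $m_\lambda=0$ for more than $n$ parts) gives generation by power sums of unbounded multidegree; the division by $c_\lambda$ is a second, easily overlooked place where characteristic zero is used, alongside the multinomial coefficients and Newton divisions you flag. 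Your second stage --- polarizing along a linear form $u$, writing $s_k(u)=\sum_{|\alpha|=k}\binom{k}{\alpha}u^\alpha p_\alpha$, invoking the one-variable Newton reduction $s_k(u)=N_k\bigl(s_1(u),\dots,s_n(u)\bigr)$ for $k>n$ as an identity in $\R[u,x]$, and extracting the coefficient of $u^\alpha$ --- correctly isolates and resolves the only nontrivial point, the degree bound $|\alpha|\le n$. The net effect is that you have supplied a proof where the paper supplies a citation, which is strictly more informative; it would be worth stating explicitly that the monomial multisymmetric polynomials (together with $1$) form a linear basis, since that is the fact that lets you pass from "the $m_\lambda$ are generated" to "every multisymmetric polynomial is generated."
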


\subsection{Deep sets}

As discussed in Section \ref{Sec:RelatedWorks}, the concept of a Sumformer, which we introduce in section \ref{section_sumformer}, is related to the concept of deep sets introduced in \cite{zaheer_deep_2018}. We also utilize the following theorem for the discontinuous proof:

\begin{theorem}[\cite{zaheer_deep_2018}, Theorem 2]\label{deepsets_discrete}
Let $Z = \{z_1, ..., z_M\}, z_m \in E$, $E$ countable and $\mathcal{Z}$ be the power set of $Z$. 
A function $f: \mathcal{Z} \rightarrow \mathbb{R}$ operating on $Z$ can be permutation invariant to the elements in $Z$, if and only if it can be decomposed in the form $\psi(\sum_{z \in Z} \phi(x))$, for suitable transformations $\phi$ and $\psi$.
\end{theorem}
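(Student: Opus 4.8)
The plan is to establish the two implications separately: the reverse direction is immediate from commutativity of addition, while the forward direction rests on constructing an \emph{injective} encoding of subsets of $E$ into $\R$. For the easy direction ($\Leftarrow$), suppose $f(Z)=\psi\big(\sum_{z\in Z}\phi(z)\big)$. Since real addition is commutative and associative, the inner sum is unchanged if we reorder the elements of $Z$, so $f$ takes the same value on every reordering and is therefore permutation invariant.

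For the forward direction ($\Rightarrow$), the decisive hypothesis is that $E$ is countable, so I would fix an injection $c:E\to\N$ and define $\phi(z):=4^{-c(z)}$. Writing $\rho(S):=\sum_{z\in S}\phi(z)$ for a subset $S\subseteq Z$, the central claim is that $\rho$ is injective on the power set $\mathcal Z$. Indeed, $\rho(S)=\sum_{z\in S}4^{-c(z)}$ is a number whose base-$4$ digit at position $c(z)$ equals $1$ when $z\in S$ and $0$ otherwise; since every digit lies in $\{0,1\}$ no carrying can occur and the base-$4$ expansion is unique, so distinct subsets yield distinct values.

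Given injectivity, I would define $\psi$ on the image $\rho(\mathcal Z)\subset\R$ by $\psi(\rho(S)):=f(S)$, which is well defined precisely because $\rho$ is injective, and extend $\psi$ to all of $\R$ arbitrarily (say by $0$). Then for every $S\in\mathcal Z$ we obtain $f(S)=\psi(\rho(S))=\psi\big(\sum_{z\in S}\phi(z)\big)$, the desired decomposition. Note that in this countable setting no continuity or other regularity of $\phi$ and $\psi$ is required, which is what makes the arbitrary extension of $\psi$ legitimate.

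The main obstacle is exactly the injectivity of $\rho$: the encoding $\phi$ must be chosen so that distinct subsets never collide. A naive choice such as $\phi(z)=2^{-c(z)}$ is fragile because of the binary carry identity $0.0111\ldots_2=0.1000\ldots_2$, which can merge distinct sets into the same sum; passing to base $4$ (any base $\ge 3$) with indicator digits in $\{0,1\}$ removes this ambiguity. Once the encoding is fixed, defining $\psi$ and verifying the identity are routine.
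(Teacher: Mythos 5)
Your proof is correct and follows essentially the same route as the original argument for this result in \cite{zaheer_deep_2018} (the present paper only cites the theorem and does not reprove it): the base-$4$ encoding $\phi(z)=4^{-c(z)}$ is precisely the device used there to make $\sum_{z\in S}\phi(z)$ an injective representation of the subset $S$, after which $\psi$ is defined on the image and extended arbitrarily. Your remark on why $\phi(z)=2^{-c(z)}$ would be fragile (binary carry ambiguity) correctly identifies the only delicate point.
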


\section{Sumformer}\label{section_sumformer}

We now introduce the new architecture \emph{Sumformer}. The name stems from the inherent dependence on the sum of a function evaluation of every token separately.

\begin{definition}[Sumformer]\label{definition_sumformer}
Let $d'\in\N$ and let there be two functions $\phi:\X\rightarrow\R^{d'},\psi:\X\times\R^{d'}\rightarrow\Y$. A \emph{Sumformer} is a sequence-to-sequence function $\mathcal{S}:\X^n\rightarrow\Y^n$ which is evaluated by first computing
\begin{equation}
    \Sigma:=\sum_{k=1}^n\phi(x_k),
\end{equation}
and then
\begin{equation}
    \mathcal S([x_1,\dots,x_n]):=[\psi(x_1,\Sigma),\dots,\psi(x_n,\Sigma)].
\end{equation}
\end{definition}

The Sumformer architecture is simple and can be approximated with Transformers, Linformers, and Performers. The simplicity of the architecture and the ability to prove the universality of multiple architectures using it suggests that Sumformers can also be approximated by other architectures and thereby give universal approximation theorems for them.

\section{Universal approximation}\label{Sec:Universal}

In this section, we give the main theorems of this paper. We first show that Sumformers are universal approximators for continuous sequence-to-sequence functions. This result can be used to give a new proof for the universality of Transformers and the first universal approximation results for Linformer and Performer.

Before continuing, we make an important assumption: For the rest of this paper, let $\X,\Y\subseteq\R^d$ and let $\X$ be a compact set. Note that $\X$ and $\Y$ do not need to have the same dimensionality in the following theorems. This only simplifies our notation.

\subsection{Sumformer}

We show two different proof ideas for the universal approximation by Sumformer.

The second relies on a local approximation with a piecewise constant function. This approximation allows us to choose the inherent dimension $d'=1$. Hence, we are able to choose a very small attention matrix. However, due to the discontinuous structure, we need exponentially many feed-forward layers in the sequence lengths $n$ and the token size $d$.

This problem can be circumvented with an approximation with continuous $\psi$ and $\phi$ using multisymmetric power sums from Definition \ref{definition_multi_sum}. In this case, four feed-forward layers and one attention or summing layer are sufficient. However, the inherent dimension $d'$ scales with $n^d$ - for a fixed d - in this case. Therefore, the related attention matrices also scale with $n^d$.

We investigate this trade-off further in Section \ref{Sec:Experiments} with numerical experiments.

\begin{theorem}[Universal Approximation by Sumformer]\label{theorem_sumformer}
For each function $f\in\Ftarget$ and for each $\e>0$ there exists a Sumformer $\mathcal{S}$ such that
\begin{equation}
    \sup_{X\in\X^n}\|f(X)-\Sc(X)\|_\infty<\e.
\end{equation}
\end{theorem}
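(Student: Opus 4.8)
The plan is to reduce the equivariant sequence-to-sequence problem to a semi-invariant sequence-to-point problem using \Cref{lemma_semiinvariance}, and then to realize the semi-invariant target as a composition matching the Sumformer structure $\psi(x_i,\sum_k\phi(x_k))$. By \Cref{lemma_semiinvariance}, any equivariant $f$ can be written coordinate-wise through a single semi-invariant $g:\X^n\rightarrow\Y$, so it suffices to approximate $g([x_1,x_2,\dots,x_n])$ uniformly by a function of the distinguished point $x_1$ together with a permutation-invariant summary of the remaining points $\{x_2,\dots,x_n\}$. Since $\phi$ sums over all $n$ tokens whereas semi-invariance only requires symmetry in the last $n-1$, I would absorb $x_1$'s own contribution into $\psi$ (it has access to $x_1$ directly), or equivalently note that $\Sigma$ together with $x_1$ determines $\sum_{k\geq 2}\phi(x_k)$; this lets the full symmetric sum $\Sigma$ stand in for the invariant summary of the tail.

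The core step is an approximation-theoretic one: I would first handle the invariant dependence on the multiset of points via multisymmetric polynomials. Because $\X$ is compact, the Stone--Weierstrass theorem applies to the algebra of continuous symmetric functions, and the multisymmetric polynomials are dense in the continuous symmetric functions on $\X^n$. By \Cref{power_sum_generation}, every multisymmetric polynomial is a genuine polynomial $\sigma$ in the finitely many power sums $p_{\alpha}$ with $|\alpha|\leq n$. Each power sum has exactly the additive form $p_\alpha(X)=\sum_{k}(x_k)^\alpha$, i.e. $\sum_k \phi_\alpha(x_k)$ with $\phi_\alpha(x)=x^\alpha$. Stacking these finitely many monomials into a single map $\phi:\X\rightarrow\R^{d'}$, with $d'$ equal to the number of multidegrees $\alpha$ satisfying $|\alpha|\leq n$ (which scales like $n^d$, matching the stated dimension count), yields $\Sigma=\sum_k\phi(x_k)$ as the vector of all relevant power sums. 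Then $g$ is approximated by first applying the outer polynomial $\sigma$ to $\Sigma$ and combining with the explicit dependence on $x_1$; I would fold all of this, together with the correction that removes the $x_1$ self-term from $\Sigma$, into a single continuous $\psi:\X\times\R^{d'}\rightarrow\Y$.

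Concretely, I would carry out the steps in this order: \emph{(i)} invoke \Cref{lemma_semiinvariance} to pass from $f$ to a semi-invariant $g$; \emph{(ii)} fix $\e>0$ and, viewing $g(x_1,\cdot)$ as a continuous function that is symmetric in $x_2,\dots,x_n$, apply Stone--Weierstrass to obtain a multisymmetric polynomial (in the tail variables, with polynomial dependence on $x_1$) approximating $g$ to within $\e$ uniformly on the compact $\X^n$; \emph{(iii)} use \Cref{power_sum_generation} to express this polynomial through the power sums, defining $\phi$ as the collection of monomials $x\mapsto x^\alpha$ and $\psi$ as the composition of $\sigma$ with the $x_1$-dependent terms; \emph{(iv)} verify that $\Sc(X)=[\psi(x_1,\Sigma),\dots,\psi(x_n,\Sigma)]$ reproduces the equivariant structure and inherits the $\e$-bound in the $\|\cdot\|_\infty$ norm.

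The main obstacle I anticipate is the careful bookkeeping around the distinguished coordinate: the Sumformer's summary $\Sigma$ is fully symmetric over all $n$ tokens, but semi-invariance only demands symmetry over the $n-1$ non-distinguished tokens, so I must ensure $\psi$ can reconstruct the correct tail-summary from $(x_1,\Sigma)$ and that this reconstruction is continuous and uniform. A secondary subtlety is justifying density: \Cref{power_sum_generation} is an exact algebraic generation statement for polynomials, but I need uniform approximation of an arbitrary continuous $g$, so the Stone--Weierstrass argument (verifying that the power-sum-generated algebra separates multisets of points on the compact $\X^n$, which holds precisely because power sums up to degree $n$ determine a multiset of $n$ points) is the step that actually delivers the approximation rather than mere representation.
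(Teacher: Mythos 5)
Your proposal is correct and follows essentially the same route as the paper's continuous proof: reduction to a semi-invariant function via \Cref{lemma_semiinvariance}, Stone--Weierstrass to pass to (semi-invariant) polynomials, \Cref{power_sum_generation} to express these through multisymmetric power sums stacked into $\phi$, and the key device of recovering the tail summary as $\Sigma-\phi(x_1)$ inside $\psi$. The only cosmetic difference is the order of the first two steps (the paper approximates $f$ by equivariant polynomials first and then extracts the semi-invariant representative, whereas you pass to $g$ first), which does not change the argument.
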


\begin{proof}[Proof sketch continuous]

    We aim to use Theorem \ref{power_sum_generation}. Therefore, for every $i \in [d]$, we approximate coordinate $i$ of $f$ with an equivariant vector of polynomials $p_{i}: \X^n \mapsto \R^n$ with an accuracy of $\epsilon / d$ (as done in \cite{hutter_representing_2020}). This is possible using a version of the Stone-Weierstrass theorem from \cite{hutter_representing_2020}. Because $p_{i}$ is equivariant we can use Theorem \ref{lemma_semiinvariance} to represent $p_{i}$ by a semi-invariant polynomial $q_i:\X^n \mapsto \R$, such that $p_{i}([x_1, \ldots, x_n])=[q_i(x_1, \left\{x_2, \ldots, x_n \right\}), \ldots, q_i(x_n, \left\{x_1, \ldots, x_{n-1}\right\})$. 

   Now, we use Theorem \ref{power_sum_generation} and a representation similar to \cite{hutter_representing_2020} to represent $q_i$ using multisymmetric monomials and polynomials of multisymmetric power sums. For this, we define a function mapping to the power sums: Let $\phi:\R^d \mapsto \R^{d'}$ be the map to all $d'$ multisymmetric monomials with order $0 < |\alpha|\leq n$. The sum in the Sumformer is then represented as $\Sigma = \sum_{i=1}^n \phi (x^{(i)})$. We represent $q_i$ by
   \begin{equation}
    \psi_i(x^{(j)}, \Sigma) = \sum_{\alpha \in P} (x^{(j)})^\alpha \cdot \sigma_\alpha(\Sigma - \phi(x^{(j)}))
\end{equation}
with $P \subseteq \mathbb{N}_0^d, |P| < \infty$ and $\sigma_\alpha$ are polynomials. Finally, by setting $\psi = [\psi_1, ..., \psi_d]$, we obtain a Sumformer $\mathcal{S}$ with $\mathcal{S}(x) = [p_1(x), ..., p_d(x)]$ which therefore also fulfills the required goodness of fit.
\end{proof}

\begin{proof}[Proof sketch discontinuous]
    Instead of approximating the equivariant function $f$, we approximate the semi-invariant and uniformly continuous (since $\X$ is compact) function $g$, which represents every component as described in Theorem \ref{lemma_semiinvariance}. To be able to use Theorem \ref{deepsets_discrete} with a countable input, we approximate $g$ with a locally constant function $\overline{g}$. The used grid is of size $(1/\delta)^{nd}$ for some $\delta>0$, which depends on $\e$. The new function $\overline g$ is also semi-invariant. 

    Now, we can assign every grid point $p \in G$ a coordinate $\chi(p) = (a, b) \in [\Delta]^d \times [\Delta]^{(n - 1) \times d}$ where $\Delta = \frac{1}{\delta}$. Furthermore, we can find a function $\lambda: [\Delta]^{(n - 1) \times d} \rightarrow \mathbb{N}$ with a finite range which yields the same output if and only if the input sequences are permutations of each other.
    
    In the next step, we can use Theorem \ref{deepsets_discrete} to find $\phi^*$ and $\psi^*$ so that
    %\begin{equation}
     $   \lambda(b) = \psi^*\left( \sum_{i = 1}^{n-1} \phi^*(b_i) \right)$.
    %\end{equation}

    Let $q: \X \rightarrow [\Delta]^d$ be the function mapping tokens to the corresponding cube-coordinate. Then by defining
    \begin{equation}
    \Sigma=\sum_{i=1}^n\phi^*(q(x))
    \end{equation}
    and
    \begin{align}
        \begin{split}
&\psi(x_1,\Sigma) \\
&:=\overline{g}\bigg(\chi^{-1}\bigg(q(x_1),\;\lambda^{-1}\Big(\psi^*\big(\Sigma-\phi(x_1)\big)\Big)\bigg)\bigg)
        \end{split}
    \end{align}
    
    we yield a Sumformer with the required goodness of fit. Note that even though $\lambda^{-1}$, in general, might not be invertible, we can find an inverse of a restriction of $\lambda$ to a subset of the domain such that properties necessary for our proof are given.
\end{proof}

\subsection{Transformer}\label{Sec:TransformerProof}

With the approximation result for Sumformers, we can now present two new proofs for the universality of Transformers. Before we give these proofs, we want to highlight the first universality theorem for Transformers from  \cite{yun_are_2020} and discuss the similiarities and differences.

\begin{theorem}[Universal Approximation by Transformer \cite{yun_are_2020}]\label{theorem_og_transformer}
Let $\e>0$ and let $1\leq p<\infty$. Then, for any continuous, permutation-equivariant function $f:\R^{n\times d}\rightarrow\R^{n\times d}$ with compact support, there exists a Transformer Network $\T$ such that
\begin{equation}
    \left(\int \|\T(X)-f(X)\|_p^p dX \right)^{1/p} \leq\e.
\end{equation}
\end{theorem}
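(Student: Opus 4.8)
The plan is to deduce the Transformer result from the Sumformer universality theorem (Theorem \ref{theorem_sumformer}) by showing that a Transformer network can approximate any Sumformer. First I would reduce the $L_p$/compact-support statement to a sup-norm statement on a compact domain: since $f$ has compact support $K$, I pick a compact box $\X^n\supseteq K$ and observe that a uniform approximation of $f$ on $\X^n$ yields an $L_p$ approximation there (finite measure), while the exterior contribution $\int_{(\X^n)^{\mathsf c}}\|\T\|_p^p$ is controlled by arranging the feed-forward parts to decay outside $K$ (a standard truncation by a ReLU bump). The essential content is therefore the uniform approximation of a continuous equivariant $f$ on $\X^n$, which by Theorem \ref{theorem_sumformer} reduces to uniformly approximating a single Sumformer $\Sc$ with continuous maps $\phi:\X\rightarrow\R^{d'}$ and $\psi:\X\times\R^{d'}\rightarrow\Y$.

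Next I would realize the Sumformer with a Transformer. Because $\X$ is compact and $\phi,\psi$ are continuous, $\phi$ maps into a compact set and $\psi$ is uniformly continuous on the relevant compact domain, so both can be uniformly approximated by fully-connected ReLU networks. The pipeline $x_k\mapsto\phi(x_k)$, then $\Sigma=\sum_k\phi(x_k)$, then $\psi(x_k,\Sigma)$ maps cleanly onto the three ingredients of a Transformer block. A token-wise $\FC$ layer (realized through the block's feed-forward map, or an input embedding) computes $\phi(x_k)$ while retaining $x_k$ in reserved coordinates. The attention layer is made to sum: choosing $W_Q=0$ forces all pre-softmax logits equal, so the softmax attention matrix is exactly uniform, with $A_{ij}=1/n$, and with $W_V$ selecting the $\phi$-coordinates, $\Att$ outputs the per-token average $\tfrac1n\sum_j\phi(x_j)$; since $n$ is fixed, a subsequent linear map rescales this to $\Sigma$. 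A final token-wise $\FC$ layer then computes $\psi(x_k,\Sigma)$. The residual connections are accommodated by carrying $x_k$ and $\Sigma$ in disjoint coordinate blocks and letting each $\FC$ cancel the identity contribution it does not need.

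Error aggregation is routine: Theorem \ref{theorem_sumformer} gives $\|f-\Sc\|_\infty<\e/2$, and the uniform continuity of $\psi$ turns the small errors in the ReLU approximations of $\phi$ (hence in the computed $\Sigma$) and of $\psi$ into a uniform error $<\e/2$ between $\Sc$ and the constructed Transformer $\T$; the triangle inequality gives a uniform error $<\e$ on $\X^n$, and the truncation bounds the exterior $L_p$ mass, yielding the claim.

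I expect the main obstacle to be twofold. First, the attention mechanism is row-stochastic and thus natively computes weighted averages rather than sums; the resolution is the uniform-attention trick ($W_Q=0$) combined with the fixed-$n$ rescaling, which gives an exact average and hence an exact sum. Second, and more delicate, is achieving this with a \emph{single} attention layer: because the block applies $\FC$ after $\Att$, the nonlinear $\phi$ must be computed before the summation, which forces it into an input embedding (or a zero-attention preceding block), and the coordinate bookkeeping under the residual connections must be arranged so that $x_k$, the intermediate $\phi$-values, and $\Sigma$ coexist without interference — this is where the bulk of the careful construction lies.
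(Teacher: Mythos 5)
Your proposal is sound, but note what this particular statement is: it is the quoted prior result of \cite{yun_are_2020}, which the paper itself does not re-prove --- its original proof goes through ``contextual mappings'' and requires a number of attention layers growing like $\e^{-d}$. What you have written is instead essentially the paper's own new argument for its sup-norm analogue (Theorem \ref{theorem_transformer}): approximate $f$ by a Sumformer via Theorem \ref{theorem_sumformer}, realize $\phi$ and $\psi$ token-wise with zero-attention blocks and the universal approximation theorem for feed-forward networks, and implement $\Sigma=\sum_k\phi(x_k)$ with a single attention layer whose softmax logits are constant so that $A=\tfrac1n\bm{1}_{n\times n}$, followed by a rescaling by $n$ (the paper forces constant logits with $W_Q=W_K=[e_1,\bm{0}]$ acting on an appended all-ones coordinate rather than your $W_Q=0$, and folds the factor $n$ into $W_V$; these are cosmetic differences). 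Your coordinate bookkeeping --- carrying $x_k$, $\phi(x_k)$, and $\Sigma$ in disjoint blocks so the residual connections do not interfere --- matches the paper's construction of the intermediate state $[1,x_i,\phi(x_i),\Sigma]$. The one genuinely additional ingredient you supply is the reduction from the $L_p$ statement to the sup-norm statement: the sup-norm bound on a compact box containing the support handles the finite-measure part, and you correctly flag that the exterior contribution must be controlled by truncating the network output, since a Transformer need not vanish where $f$ does. This reduction is routine but necessary for the $L_p$ form as stated, and it is the only part not already present in the paper; what the Sumformer route buys over the cited original proof is the reduction to a single nonzero attention layer.
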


The first noticeable difference is the fact that \cite{yun_are_2020} uses the $L_p$ norm to measure the accuracy. In our setting, we aim to understand the worst-case behavior and therefore use the supremum norm. Furthermore, \cite{yun_are_2020} also gives proofs for functions that are not equivariant by using positional encoding. Because the positional encoding is added only to the input and does not change any further points about the architecture, this can probably be applied also in our case.

Beyond the difference in the theorem setup, we also have a very different proof strategy. The proof in \cite{yun_are_2020} relies on the concept of contextual mappings. To implement these mappings, the Transformer needs $\e^{-d}$ many attention layers, where $d$ is the token size and $\e$ is the desired approximation accuracy. With our proof, we improve upon this result by showing that we only need one attention layer, which is used to represent the sum in the Sumformer.

With this information, we can now state our theorem for the universal approximation by Transformers.

\begin{theorem}[Universal Approximation by Transformer]\label{theorem_transformer}
For each function $f\in\Ftarget$ and for each $\e>0$ there exists a Transformer $\T$ such that
\begin{equation}
\sup_{X\in\X^n}\|f(X)-\T(X)\|_\infty<\e.    
\end{equation}
\end{theorem}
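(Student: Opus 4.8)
The plan is to reduce the Transformer approximation problem to the already-established Sumformer result of Theorem \ref{theorem_sumformer}. Since any equivariant $f\in\Ftarget$ can be approximated to within $\e/2$ by a Sumformer $\Sc$ built from functions $\phi:\X\to\R^{d'}$ and $\psi:\X\times\R^{d'}\to\Y$, it suffices to show that a Transformer network $\T$ can approximate this particular Sumformer $\Sc$ to within $\e/2$ on the compact set $\X^n$; the triangle inequality then yields the claim. The whole argument therefore hinges on emulating the three operations of a Sumformer---the token-wise map $\phi$, the global sum $\Sigma=\sum_k\phi(x_k)$, and the token-wise map $\psi(\cdot,\Sigma)$---using the components available in a Transformer block, namely token-wise fully connected layers, one attention layer, and residual connections.

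The key steps, in order, would be as follows. First, I would use a token-wise feed-forward layer to compute (an approximation of) $\phi$ applied to each token $x_k$, relying on the standard universal approximation theorem for fully connected networks; the residual structure of $\Block$ can be handled by having the network output $\FC(x)\approx\phi(x)-x$ so that the residual sum reconstitutes $\phi(x)$, or by widening the token dimension so that the original token and its image $\phi(x_k)$ are stored in separate coordinate blocks. Second---and this is the crucial step---I would realize the sum $\Sigma$ using a single attention layer. The idea is that a uniform attention matrix, in which every entry equals $1/n$, maps the value matrix $XW_V$ to a matrix each of whose rows is the average $\tfrac1n\sum_k (XW_V)_k$; choosing $W_V$ to read off the $\phi$-coordinates and rescaling by $n$ then places the full sum $\Sigma$ into every token's representation. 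Such a uniform attention matrix is produced (approximately) by the row-wise softmax $\rho$ when the pre-softmax logits $(XW_Q)(XW_K)^\top/\sqrt d$ are all equal, which is achieved by taking $W_Q=0$ (or $W_K=0$), making every logit zero and hence every softmax row exactly uniform. Third, I would append further token-wise feed-forward layers to compute $\psi(x_k,\Sigma)$ from the pair $(x_k,\Sigma)$ now available in each token's representation, again invoking the fully connected universal approximation theorem on the compact domain, and finally project down to $\Y$.

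The main obstacle I expect is controlling the errors introduced by composing these approximations, and in particular ensuring the exactness (or near-exactness) of the summation step. The softmax trick gives a genuinely uniform attention matrix when $W_Q=0$, so the averaging is exact rather than approximate, which is a clean simplification; but I must then confirm that setting $W_Q=0$ is compatible with the Definition \ref{def_atthead} (it is, since $W_Q=0\in\R^{d\times d}$ is an admissible weight matrix) and that scaling by $n$ to turn the average into a sum can be absorbed into $W_V$ or the following $\FC$ layer. A subtler point is compactness and boundedness of intermediate quantities: because the feed-forward approximations of $\phi$ and $\psi$ are only valid on compact sets, I need the image of $\X^n$ under the first layers, and in particular the range of $\Sigma$, to lie in a fixed compact set---which it does, since $\X$ is compact and $\phi$ is continuous, so $\Sigma$ ranges over a bounded subset of $\R^{d'}$. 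Finally, I would verify that the composition respects permutation equivariance at every stage (token-wise maps and uniform attention are manifestly equivariant), so that the resulting $\T$ indeed approximates the equivariant target $f$ uniformly, completing the $\e/2+\e/2$ bound.
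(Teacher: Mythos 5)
Your proposal is correct and takes essentially the same route as the paper's proof: a triangle-inequality reduction to the Sumformer of Theorem \ref{theorem_sumformer}, token-wise feed-forward layers (with widened token dimension and zeroed-out attention) to realize $\phi$ and later $\psi$, and a single attention layer with a uniform attention matrix, rescaled by $n$ in $W_V$, to place $\Sigma$ in every token. The only cosmetic difference is how the uniform attention matrix is obtained---you set $W_Q=0$ to make all logits zero, while the paper prepends a constant coordinate $1$ to each token and takes $W_Q=W_K=[e_1,\bm{0}]$ so that all logits equal one---but both yield exactly $A=\tfrac1n\bm{1}_{n\times n}$ and the rest of the argument is identical.
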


\begin{proof}[Proof Sketch]
   First, note that the weights in the attention matrix can be set to zero; this way, we can get feed-forward networks only. In the continuous case, $\phi$ is also continuous and can therefore be approximated with a 2-layer network by \cite{hornik_multilayer_1989}. For the discontinuous proof, we know from \cite{yun_are_2020} that we need $\mathcal O (n(1/\e)^{nd}/n!)$ many layers for the approximation.

In the following steps, we approximate the sum with an attention head. This step is equal for the continuous and discontinuous settings. However, in the discontinuous case, we can set $d'=1$. This step is also the only step we need to investigate for the Linformer and Performer proof. 

We first use a feed-forward neural network to have as input the matrix: 
 \begin{equation}\label{Eq:TransformerInput}
       \begin{bmatrix}
           1 & x_1 & \phi(x_1) & \bm{0}_{d'} \\
           & \ldots && \\
           1 & x_n & \phi(x_n) & \bm{0}_{d'}
       \end{bmatrix} \in \R^{n \times 1 + d+ 2d'}
   \end{equation} 
%with only ones, the input to the function, the output of $\phi$ and zeros, where $\Sigma$ will be added after the attention step. 
Then, we choose 
\begin{equation}\label{Eq:WQWK}
    W_Q=W_K=[e_1,\bm{0}_{(1+d+2d')\times (1+d+2d')}]%\in\R^{(1+d+2d')\times(1+d+2d')}
\end{equation}
with $e_1=[1,\bm{0}_{d+2d'}]^\top\in\R^{1+d+2d'}$ such that
   %\begin{equation}
    $   A= \frac 1 n 1_{n \times n}$
   %\end{equation}
   and $W_V$ such that we get together with the skip connection:
   \begin{equation}\label{Eq:TransformerOutput}
       \begin{bmatrix}
           1 & x_1 & \phi(x_1) & \Sigma \\
           & \ldots && \\
           1 & x_n & \phi(x_n) & \Sigma
       \end{bmatrix} \in \R^{n \times 1 + d+ 2d'}
   \end{equation} 
   We can then, in the continuous case, apply another two layers for the approximation of the continuous $\psi$, or we need another $\mathcal O(n (1/\e)^{nd}/n!)$ many feed-forward layers to approximate the $\psi$ build in the discontinuous case.
\end{proof}

\subsubsection{Network size}

Using Sumformer, we were able to give two different constructions for the Transformer as universal approximators. We note that the construction of the attention head remains the same except for the possible choice of $d'$. When we approximate $\phi$ and $\psi$ with smooth functions, we need a larger latent dimension $d'$. In the discontinuous construction, we need more layers to approximate $\phi$ and $\psi$ but can approximate the function of interest $f$ using only $d'=1$. The same situation can be observed for the efficient Transformers as we only replace the attention heads but keep the functions $\phi$ and $\psi$ from the proof of the Transformer. There might be another way of representing functions with Sumformers. However, the current proofs suggest a trade-off between the size of the latent dimension $d'$ and the number of necessary layers. In Section \ref{Sec:Experiments}, we test the dependence of the validation loss on the relationship of $d'$ to the sequence length $n$ and the token size $d$.

\subsection{Efficient Transformers are Universal Approximators}\label{Ch:UnivEffTrans}

%\subsection{Analysis of existing efficient transformers}

Using the concept of Sumformer, we can show that Linformer and Performer are universal approximators for continuous functions on a compact support. We are able to utilize the proof for Transformers as the architecture is only changed in the attention head, which forms the main computational cost of Transformer. As the rest of the architecture stays the same, this part of the proof does not need to be adapted. We start with Linformer as introduced in Definition \ref{def_linatthead}.

\begin{theorem}[Universal Approximation by Linformer]\label{theorem_linformer}
%Let $k\in\N$ with $k<n$. 
For each function $f\in\Ftarget$ and for each $\e>0$ there exist $k \in \mathcal O(d/\e^2)$ and there exist matrices $E,F\in\R^{k\times n}$ and a Linformer $\T_{\mathrm{Lin}}$ such that
\begin{equation}
    \sup_{X\in\X^n}\|f(X)-\T_{\mathrm{Lin}}(X)\|_\infty<\e.
\end{equation}
\end{theorem}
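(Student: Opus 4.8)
The plan is to reuse the Transformer construction from Theorem~\ref{theorem_transformer} essentially verbatim and to modify only the single attention head. Since a Linformer differs from a Transformer solely in its attention mechanism (Definition~\ref{def_linatthead})---the token-wise feed-forward layers implementing $\phi$ and $\psi$, together with the residual connections, are untouched---the entire argument reduces to one claim: a \emph{single Linformer attention head} can still produce the sum $\Sigma=\sum_{k=1}^n\phi(x_k)$ in every row, exactly as in Equation~\ref{Eq:TransformerOutput}. Everything preceding and following that step (the two layers computing $\phi$ via \cite{hornik_multilayer_1989}, and the layers computing $\psi$) is inherited unchanged from the Transformer proof, and the final accuracy then follows from uniform continuity of $\psi$ on the compact set $\X^n$.

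First I would feed the same input matrix~\ref{Eq:TransformerInput}, whose rows are $[1,\,x_i,\,\phi(x_i),\,\bm{0}_{d'}]$, and reuse the all-ones-column trick~\ref{Eq:WQWK}: choosing $W_Q=W_K=[e_1,\bm{0}]$ makes every row of $XW_Q$ equal to $e_1^\top$, so the argument $(XW_Q)(EXW_K)^\top/\sqrt{d}\in\R^{n\times k}$ of $\rho$ has entry $(i,j)$ equal to the $j$-th row-sum of $E$ (divided by $\sqrt{d}$), independent of $i$. Picking $E$ with constant row sums then forces $\rho$ to return the uniform weights $\tfrac1k 1_{n\times k}$. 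It remains to choose the value side so that $\tfrac1k 1_{n\times k}\,(FXW_V)$ equals $[\bm{0},\bm{0},\bm{0},\Sigma]$ in each row: taking $W_V$ to select the $\phi$-block and $F$ with all column sums equal to $k$, the $i$-th output row becomes $\sum_{\ell}\big(\tfrac1k\sum_j F_{j\ell}\big)\phi(x_\ell)=\sum_{\ell}\phi(x_\ell)=\Sigma$, so the residual connection yields exactly Equation~\ref{Eq:TransformerOutput}. This works for any $k$ (in particular small $k$, hence certainly some $k\in\mathcal O(d/\e^2)$) and is insensitive to $d'$, because the full vectors $\phi(x_\ell)\in\R^{d'}$ are combined with scalar weights and never pass through the $k$-dimensional bottleneck individually.

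The point requiring the most care is precisely this low-rank bottleneck $k<n$: the projections $E,F$ compress the $n$ tokens into $k$ summaries before averaging, and one must verify this compression does not destroy the sum. The exact choice above (equal row-sums of $E$, equal column-sums of $F$) resolves it deterministically. To instead remain faithful to Linformer's random-projection design and recover the stated rate $k\in\mathcal O(d/\e^2)$, the alternative is to configure the full attention head of the Transformer proof so that it computes $\Sigma$, and then invoke the Johnson--Lindenstrauss approximation guarantee of \cite{wang_linformer_2020}: with Gaussian $E,F$ and $k\in\mathcal O(d/\e^2)$ the Linformer head approximates that full head uniformly to any target accuracy, so the computed $\widehat\Sigma$ satisfies $\|\widehat\Sigma-\Sigma\|$ small. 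The remaining work is then to push this error through the uniformly continuous $\psi$-layers on $\X^n$ so that the total deviation stays below $\e$. I expect this error-propagation-through-softmax-and-projection step, rather than the algebraic construction of the attention weights, to be the main obstacle.
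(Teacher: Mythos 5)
Your proposal is correct and follows essentially the same route as the paper: keep the Transformer construction for $\phi$ and $\psi$, and choose deterministic all-ones-type projections $E$ (constant row sums) and $F$ (column sums normalizing the $\tfrac1k$ softmax weights) so that the Linformer attention head reproduces $\Sigma$ exactly in every row, after which the residual connection and the $\psi$-layers go through unchanged; the paper's choice $E=\tfrac1n\bm{1}_{k\times n}$, $F=\tfrac1k\bm{1}_{k\times n}$ is a special case of your condition. The Johnson--Lindenstrauss alternative you sketch is not needed (and is not what the paper does), since the theorem only asserts existence of $E,F$ and your exact deterministic construction already works for every $k$.
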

\begin{proof}
By Definition \ref{def_linatthead}, Linformer $\T_{\mathrm{Lin}}$ have the same architecture as Transformer $\T$ except for the attention head. Therefore, we can use the same construction for $\psi$ and $\phi$ as in the proof of Theorem \ref{theorem_og_transformer}. It remains to show that we can represent the sum in the Sumformer with the linear attention head as well. We now discuss how the weight and projection matrices are chosen for the approximation. Let $E=\frac{1}{n}\bm{1}_{k\times n}$ and $F=\frac{1}{k}\bm{1}_{k\times n}$, $W_Q, W_K, W_V$ as in Equation \eqref{Eq:WQWK} and we get that the Linformer attention layers maps to
\begin{align}
    \begin{split}
        &\rho((XW_Q)(EXW_K)^T)\cdot(FXW_V) =[\bm{0}_{n \times 1+d+d'}, \Sigma] %\in \R^{n \times 1+d+2d'}.
    \end{split}
\end{align}
%Multiplying the input matrix $X=[...]$ with $E$ or $F$ from the left builds the sum of the rows. Hence with the same choice of the matrices $W_Q,W_K = [...]$ we get
% \begin{equation}
%     (XW_Q)(EXW_K)^T=\bm{1}_{n\times k}
% \end{equation}
% and with $W_V=[...]$ we get
% \begin{equation}
%     \bm{1}_{n\times k}\cdot(FXW_V).
% \end{equation}
After applying the skip connection, we get the same output as in Equation \eqref{Eq:TransformerOutput} in Theorem \ref{theorem_og_transformer}. Therefore, we can apply the same representation for $\psi$ and get the desired approximation.
\end{proof}

Now, even though the structure and idea of Performer differ a lot from Linformer, we can use a similar strategy to show the universal approximation.

\begin{theorem}[Universal Approximation by Performer]\label{theorem_performer}
Let $k\in\N$ with $k<n$. For each function $f\in\Ftarget$ and for each $\e>0$ there exists a Performer $\T_{\mathrm{Per}}$ such that
\begin{equation}
    \sup_{X\in\X^n}\|f(X)-\T_{\mathrm{Per}}(X)\|_\infty<\e.
\end{equation}
\end{theorem}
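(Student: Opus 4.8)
The plan is to follow the Linformer argument of Theorem~\ref{theorem_linformer} almost verbatim: a Performer network $\T_{\mathrm{Per}}$ differs from the Transformer of Theorem~\ref{theorem_transformer} only in the attention head, so I would reuse the input preparation of Equation~\eqref{Eq:TransformerInput} and the feed-forward realisations of $\phi$ and $\psi$ without change. The sole new ingredient is to check that a Performer attention head (Definition~\ref{def:Performer}) can carry out the summation step of the Sumformer, i.e.\ that it can be tuned to reproduce Equation~\eqref{Eq:TransformerOutput} after the residual connection.

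The idea I would use is to collapse all queries and keys to a single point of the feature map $a$. Setting $W_Q=W_K=\bm 0$ (of the working dimension $1+d+2d'$, as in the Transformer proof) makes every projected query and key equal to $\bm 0$, and since $a(\bm 0)=\tfrac1{\sqrt k}\,\bm 1_k$ (the Gaussian factor and all exponentials evaluate to $1$ at the origin), the self-inner-product is exactly $\langle a(\bm 0),a(\bm 0)\rangle=\tfrac1k\sum_{l=1}^k 1=1$, independently of the sampled frequencies $\w_1,\dots,\w_k$. Expanding the head row-wise (with $(XW_V)_j$ denoting the $j$-th row of $XW_V$) then gives
\[
  \PerformerHead(X)_i=\sum_{j=1}^n\langle a(\bm 0),a(\bm 0)\rangle\,(XW_V)_j=\sum_{j=1}^n (XW_V)_j ,
\]
so the (unnormalised) Performer head computes an \emph{exact} sum over the sequence for every admissible $k<n$.

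It then remains to choose $W_V$ exactly as in the Transformer and Linformer constructions, namely the linear map sending a prepared row $[1,x_i,\phi(x_i),\bm 0_{d'}]$ to $[\bm 0_{1+d+d'},\phi(x_i)]$. With this choice $\PerformerHead(X)_i=[\bm 0_{1+d+d'},\Sigma]$ for $\Sigma=\sum_{j=1}^n\phi(x_j)$, and after the residual connection each row equals $[1,x_i,\phi(x_i),\Sigma]$, which is precisely Equation~\eqref{Eq:TransformerOutput}. From here the feed-forward blocks approximating $\psi$ carry over unchanged from the proof of Theorem~\ref{theorem_transformer}, and the sup-norm bound follows from the Sumformer approximation of Theorem~\ref{theorem_sumformer}.

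The conceptual obstacle --- and the reason this is not a literal copy of the Transformer proof --- is that the Performer head has no softmax, so one cannot force uniform ``attention'' through the query--key logits as before. The key observation is that uniformity comes for free once all queries and keys are mapped to a common point: any common target $v$ would make every kernel entry equal to the constant $\langle a(v),a(v)\rangle$, which can be absorbed into $W_V$, but $v=\bm 0$ is the cleanest choice because it makes this constant deterministically equal to $1$ and thus removes any dependence on the realised random features. Consequently no error is introduced in the summation step, and all genuine approximation error lives in the reused feed-forward layers.
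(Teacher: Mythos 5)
Your proof is correct and follows essentially the same route as the paper: reduce to the Sumformer, reuse the Transformer's feed-forward realisations of $\phi$ and $\psi$, and make the Performer kernel matrix a constant multiple of $\bm{1}_{n\times n}$ by collapsing all queries and keys to a single point. Your specific choice $W_Q=W_K=\bm{0}$ is a small refinement over the paper's construction: it makes the kernel constant equal to $1$ deterministically (independently of the sampled $\w_1,\dots,\w_k$), so the normalisation can be absorbed into $W_V$, whereas the paper fixes the random features, obtains some resulting constant $\lambda$, and removes it with an additional feed-forward layer of weight $\frac{1}{\lambda n}I$.
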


\begin{proof}
As in the proof for the Linformer attention layer we use the fact that the Performer $\T_{\mathrm{Per}}$ only differs from a Transformer $\T$ by the choice of the attention head. Therefore, we now build a Performer attention head which is able to approximate the sum for the Sumformer.  

We choose the same $W_Q$ and $W_K$ as in Equation \eqref{Eq:WQWK}. Next, we fix the vectors $w_1,\dots,w_k$ in $a$ in the Performer Definition \ref{def:Performer}. Then, because all rows are the same and $a$ is applied row-wise, $a(XW_Q)a(XW_K)^\top = \lambda\cdot\bm{1}_{n\times n}$ for some $\lambda\in\R.$ 

In contrast, to the previous proof, we need to add another feed-forward layer after the attention layer. We choose the weight matrix to be $W=\frac{1}{\lambda n}I_{(1+d+2d')}$ and the bias $b=\bm{0}_{1+d+2d'}$. Then, we get an output of 
\begin{align}
    \begin{split}
        & Wa(XW_Q)a(XW_K)^\top(XW_V)+ b \\ & =[\bm{0}_{n \times 1+d+d'},\Sigma]^T.
    \end{split}
\end{align}
With the skip connection we get the desired input for $\psi$ and are able to use the same approximation for $\psi$ as in Theorem \ref{theorem_transformer}.
\end{proof}

\section{Numerical Experiments}\label{Sec:Experiments}

We implemented two different Sumformer architectures and tested them on approximating analytically given (i.e., non-real-world) functions. Both architectures consist of three components: one representing $\phi$, one representing $\psi$, and the last combining the two as described in Definition \ref{definition_sumformer}.

The function $\psi$ is represented by a Multi-layer perceptron (MLP) in both architectures. The representation of $\phi$ differs: The first model uses the $\phi$ we constructed in the proof of Theorem \ref{theorem_sumformer} (Polynomial Sumformer), whereas the second one uses an MLP again (MLP Sumformer).

Each MLP we used consisted of five hidden layers of 50 nodes. We use the ReLU activation function.

\begin{figure*}[t!]
\vskip 0.1in

\begin{center}
\begin{subfigure}          
\centering\includegraphics[width=0.98\columnwidth]{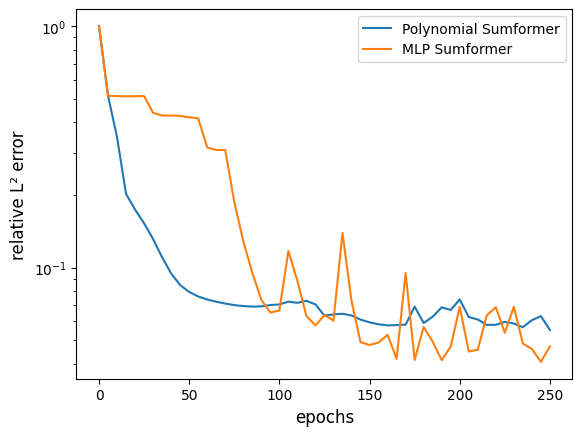}
        \end{subfigure}
        \hfill
        \begin{subfigure}
        \centering\includegraphics[width=0.98\columnwidth]{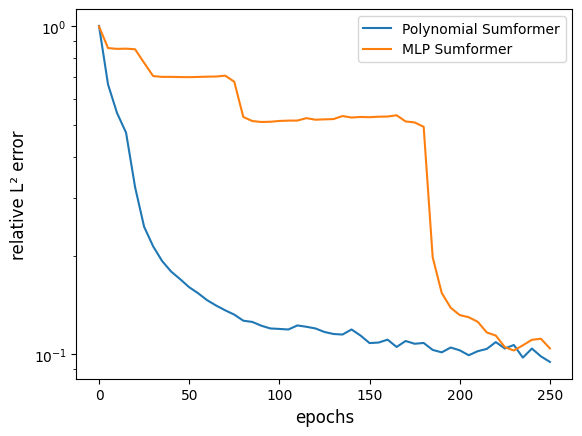}
        \end{subfigure}
        \caption[ Examples of validation errors (every 5 epochs; relative $L^2$ error) of the two Sumformer models over a training run on 2000 data points with $n = 5, d = 4$. The equivariant functions were defined by semi-invariant functions of different kinds: \emph{(left)} polynomial-type, \emph{(right)} non-polynomial-type. ]
        {\fontsize{9}{11} Examples of validation errors (every 5 epochs; relative $L^2$ error) of the two Sumformer models over a training run on 2000 data points with $n = 5, d = 4$. The equivariant functions were defined by semi-invariant functions of different kinds: \emph{(left)} polynomial-type, \emph{(right)} non-polynomial-type. } 
        \label{fig_function_approximation}
        \end{center}
        \vskip -0.1in
\end{figure*}

We trained our two models (using the same latent dimension $d'$) on approximating multiple equivariant functions (assuming $\X = [0, 1]^d$): two polynomial-type and two non-polynomial-type functions. The results (Fig.\ref{fig_function_approximation}) show that the previous results are not just theoretical: Sumformer architectures can approximate a variety of functions.

It is interesting to note that the two Sumformers perform approximately equally well on most functions we approximated (polynomial \& non-polynomial type). Based on this, we observe that the construction used in the continuous proof of Theorem \ref{theorem_sumformer} is indeed able to learn our benchmark functions using gradient descent.

Furthermore, we observe that the validation loss of the Polynomial Sumformer is smoother and decreases in a more stable way than that of the MLP Sumformer. In contrast, the validation loss of the MLP Sumformer often jumps to drastically lower levels over just a few epochs and is relatively flat apart from that. This phenomenon could be explained by the interaction of the two disjoint trainable components (MLPs).

\begin{figure}[ht!]
\vskip 0.1in
\begin{center}
\centerline{\includegraphics[width=0.98\columnwidth]{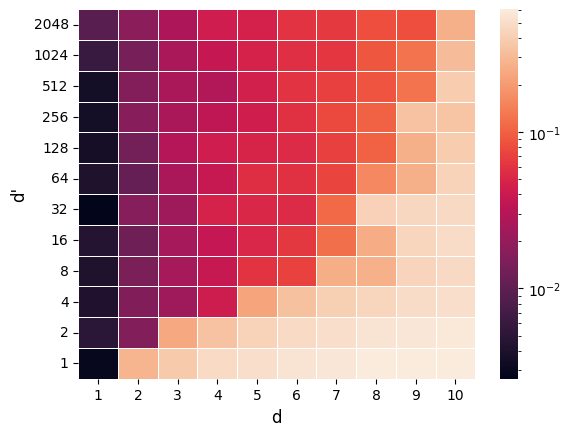}}
\caption{\fontsize{9}{11} Development of best validation errors (relative $L^2$ error) over multiple training runs (2000 data points; 200 epochs; $n = 3$) with exponentially increasing latent dimension $d'$ for ten dimensions $d$. The equivariant functions (for every $d$) were defined by the following polynomial-type semi-invariant function
        $g: [x^{(1)}, ..., x^{(3)}] \mapsto x^{(1)} + 7 \cdot (x^{(1)})^2 + 3 \cdot x^{(1)} \cdot (x^{(2)} + x^{(3)})^3$
        where products and sums are to be understood component-wise.}
\label{second_experiment}
\end{center}
\vskip -0.1in
\end{figure}

We also tested how changing the dimension $d'$ (see Definition \ref{definition_sumformer}) in the MLP Sumformer impacts the best validation loss over a fixed number of epochs while holding $n, d$ and the function to approximate constant. The results (Fig. \ref{second_experiment}) show - as expected - that higher dimensions $d'$ generally lead to better approximation. Furthermore, when changing $d$ linearly, we have to make non-linear - presumably exponential - changes to the size of $d'$ to achieve significantly diminishing returns on further increasing $d'$.

This finding is particularly interesting as the continuous proof of Theorem \ref{theorem_sumformer} needs $d' = \begin{pmatrix}
    n + d\\
    d
\end{pmatrix} - 1 = \frac{(n + d)!}{d! n!} - 1 = \prod_{i = 1}^d \frac{n + i}{i} - 1$ in $\phi$ for a fixed $n$. This suggests that the empirical performance aligns with the theory.

\section{Conclusion}

We have seen that the efficient Transformers, Linformer, and Performer, are able to represent all equivariant continuous sequence-to-sequence functions on compact sets arbitrarily well. Due to the simplicity of the Sumformer architecture on which the proofs are based, it seems likely that further research can use similar techniques to show that other Transformer architectures and state space models are also universal approximators.

In addition, we offered a new proof for universal approximation by Transformer and were able to reduce the necessary number of non-zero attention layers to only one.

In our experiments, we showed that the construction from our continuous proof of universal approximation by Sumformer is tractable and indeed able to approximate given functions using gradient descent. Furthermore, our numerical results about the impact of the latent dimension $d'$ of a Sumformer in relation to the token size $d$ nicely relate to the required size of the latent dimension in our continuous proof. 

Lastly, we note that a significant limitation of our continuous proof is that (for a fixed token size $d$) the size of the attention matrix scales with $n^d$. In other words: Although for a \textit{fixed} model dimension $d'$ the computational cost scales linearly in $n$, for achieving universal approximation the required dimension $d'$ grows polynomially in $n$ and correspondingly the overall computational cost. In the discontinuous setting, we were able to keep the latent dimension small but had to scale the number of feed-forward layers accordingly. It would be interesting to improve on this result and analyze the trade-off further in future research.

\section*{Acknowledgements}

LT and GK acknowledge support from the German Research Foundation in the frame of the priority programme SPP 2298. SA appreciates the support by the Stanford Graduate Fellowship. GK is also grateful for partial support by the Konrad Zuse School of Excellence in Reliable AI (DAAD), the Munich Center for Machine Learning (BMBF) as well as the German Research Foundation under Grants KU 1446/31-1 and KU 1446/32-1 and under Grant DFG-SFB/TR 109, Project C09 and the Federal Ministry of Education and Research under Grant MaGriDo.

\clearpage

\bibliographystyle{icml2023}
\bibliography{main}

\newpage
\appendix
\onecolumn

\section{Proofs of the universal approximation results for Sumformer}

In this section we give the details of the continuous and discontinous proofs of Theorem \ref{theorem_sumformer}.

\begin{proof}[Discontinuous case]
By Lemma \ref{lemma_semiinvariance}, there exists a semi-invariant function $g:\X^n\rightarrow\Y$ such that $f(X)=[g(x_1,\{x_2,\dots,x_n\}), \ldots, g(x_n,\{x_1, \ldots, x_{n-1}\})]$. Since $f$ is continuous, the component functions $f_1,\dots,f_n$ are also continuous and thus also $g$. The compactness of $\X$ implies that $\X^n$ is compact and therefore $g$ is uniformly continuous. Without loss of generality, let the compact support of $g$ be contained in $[0,1]^{n\times d}$. Then, we define a piece-wise constant function $\overline{g}$ by
\begin{equation}
    \overline{g}(X)=\sum_{\bm{p}\in\mathcal{G}}g(\bm{p})\bm{1}\{X\in C_{\bm{p}}\},  
\end{equation}

where the grid $\mathcal{G}:=\{0,\delta,\dots,1-\delta\}^{n\times d}$ for some $\delta:=\frac{1}{\Delta}$ with $\Delta\in\N$ consists of cubes $C_{\bm{p}}=\prod_{i=1}^n\prod_{k=1}^d[\bm{p}_{i,k},\bm{p}_{i,k}+\delta)$ with corresponding values $g(\bm{p})\in\Y$ for each $\bm{p}\in\mathcal{G}$. Because $g$ is uniformly continuous, there exists for each $\e>0$ a $\delta>0$ such that
\begin{equation}
    \sup_{X\in\X^n}\|g(X)-\overline{g}(X)\|_\infty<\e.
\end{equation}
We next show that that $\overline{g}$ is semi-invariant.  Since $g$ is semi-invariant, we have $g([x_1,x_{\pi(2)},\dots,x_{\pi(n)}])=g([x_1,x_2,\dots,x_n])$ for any permutation $\pi:[n]\setminus\{1\}\rightarrow [n]\setminus\{1\}$. With $\bm{p}=[p_1,\dots,p_n]$, we can write $\pi(\bm{p})=[\bm{p}_1,\bm{p}_{\pi(2)},\dots,\bm{p}_{\pi(n)}]$ and get $g(\bm{p})=g(\pi(\bm{p}))$. Moreover, we get $X\in C_{\bm{p}}\Leftrightarrow \pi(X)\in C_{\pi(\bm{p})}.$ Hence, for any $X\in C_{\bm{p}}$, we get
\begin{equation}
    \overline{g}(X)=g(\bm{p})=g(\pi(\bm{p}))=\overline{g}(\pi(X)).
\end{equation}

Now, we want to to represent $\overline{g}$ using an appropriate $\Sc$. While it is trivial to match each $X$ to its corresponding $\bm{p}$ such that $X\in C_{\bm{p}}$, it is more difficult to find the corresponding cube of $X$ when only being able to use $x_1$ and the aggregated $\Sigma$. 

To achieve this, we will use the following strategy: Recall that $\Delta\in\N$ is the number of cubes in each dimension. We can assign each grid point $\bm{p}\in\mathcal{G}$ a coordinate $\chi(\bm{p})=(a,\bm{b})\in[\Delta]^d\times[\Delta]^{(n-1)\times d}.$ The map $\chi:\mathcal{G}\rightarrow [\Delta]^d\times[\Delta]^{(n-1)\times d}$ is bijective and the first part of the coordinate $a\in[\Delta]^d$ can be constructed from $x_1$ by quantizing it in each dimension. Let $q:\X\rightarrow[\Delta]^d$ be this quantization function such that $q(x_1)=a$.

Let us now find a way to choose $\phi$ and $\psi$ such that we can reconstruct $\bm{b}$ from $\Sigma$. We can treat $\bm{b}$ as a sequence of length $n-1$ and write $\bm{b}=[b_1,\dots,b_{n-1}]$ with $b_i\in[\Delta]^d$. Since there are finitely many 
$\bm{b}\in[\Delta]^{(n-1)\times d}$, we can enumerate all $\bm{b}$ using a function $\lambda:[\Delta]^{(n-1)\times d}\rightarrow\N$. Moreover, let us choose $\lambda$ to be invariant to permutations of $[b_1,\dots,b_{n-1}]$, i.e. for all permutations $\pi:[n-1]\rightarrow [n-1]$ we have $\lambda([b_1,\dots,b_{n-1}])=\lambda([b_{\pi(1)},\dots,b_{\pi(n-1)}]),$  but we let $\lambda$ always assign different values to $\bm{b}_1,\bm{b}_2$ if they are not a permutation of each other. Although this prevents $\lambda$ from being injective, all cubes with the same value under $\lambda$ have the same value under $\overline{g}$, due to semi-invariance, i.e. for a fixed $a\in[\Delta]^d$ and for all $n$ in the range of $\lambda$ the inverse is well defined and we can evaluate
\begin{equation}
\overline{g}\Big(\chi^{-1}\big(a,\lambda^{-1}(n)\big)\Big).
\end{equation}
Now, $\lambda$ is an invariant sequence-to-point function and since $[\Delta]^d$ is countable, we can utilize Theorem \ref{deepsets_discrete} (note that we use multisets of a fixed size here, to which the proof in \cite{zaheer_deep_2018} can be easily extended) to find $\phi^*:[\Delta]^{(n-1)\times d}\rightarrow\R$ and $\psi^*:\R\rightarrow\N$ such that
$$\lambda(\bm{b})=\psi^*\left(\sum_{i=1}^{n-1}\phi^*(b_i)\right)$$
With the quantization function $q$ we set $\phi(x):=\phi^*(q(x))$ and define
\begin{equation}
    \Sigma=\sum_{i=1}^n\phi^*(q(x)).
\end{equation}
We can than recover $\lambda(\bm{b})$ by
\begin{equation}
    \lambda(\bm{b})=\psi^*\left(\Sigma-\phi(x_1)\right).
\end{equation}
Now, we can define $\psi$ such that the related $\mathcal{S}$ is equal to $\overline{g}:$
\begin{equation} \psi(x_1,\Sigma):=\overline{g}\bigg(\chi^{-1}\bigg(q(x_1),\;\lambda^{-1}\Big(\psi^*\big(\Sigma-\phi(x_1)\big)\Big)\bigg)\bigg).  
\end{equation}
Since we chose $\overline{g}$ to uniformly approximate $g$ and thereby each component of $f$ up to $\e$ error, this implies that $\Sc$ uniformly approximates $f$ up to $\e$ error.
\end{proof}

\begin{proof}[Continuous case]
As before we have that the compactness of $\mathcal{X}$ implies that $\mathcal{X}^n$ is compact and without loss of generality, we can assume that the compact support of $f$ is  contained in $[0, 1]^{n \times d}$.

Now, for every $i \in [d]$, we approximate coordinate $i$ of $f$ with an equivariant vector of polynomials $p_{i}: \X^n \mapsto \R^n$ with an accuracy of $\epsilon / d$ (as done in \cite{hutter_representing_2020}). This is possible using a version of the Stone-Weierstrass theorem from \cite{hutter_representing_2020}. Because $p_{i}$ is equivariant we can use Theorem \ref{lemma_semiinvariance} to represent $p_{i}$ by a semi-invariant polynomial $q_i:\X^n \mapsto \R$, such that $p_{i}([x_1, \ldots, x_n])=[q_i(x_1, \left\{x_2, \ldots, x_n \right\}), \ldots, q_i(x_n, \left\{x_1, \ldots, x_{n-1}\right\})]$. 

   Now, we use Theorem \ref{power_sum_generation} and a representation similar to \cite{hutter_representing_2020} to represent $q_i$ using multisymmetric monomials and polynomials of multisymmetric power sums. For this, we define a function mapping to the power sums:  Let 
\begin{equation}
    \phi: [0, 1]^d \rightarrow \mathbb{R}^{d'}, x \mapsto \begin{pmatrix}
        x_1^{1} x_2^{0} \cdots x_d^{0}\\
        x_1^{2} x_2^{0} \cdots x_d^{0}\\
        \vdots\\
        x_1^{\alpha_1} x_2^{\alpha_2} \cdots x_d^{\alpha_d}\\
        \vdots\\
        x_1^{0} x_2^{0} \cdots x_d^{n}\\
    \end{pmatrix}
\end{equation}
where $\alpha = (\alpha_1, ..., \alpha_d)$ runs over all multidegrees with order $0 < |\alpha| \leq n$. The sum in the Sumformer is then represented as $\Sigma = \sum_{i=1}^n \phi (x^{(i)})$. 

   By Theorem \ref{power_sum_generation} the function
\begin{equation}
    s_j(x^{(i \neq j)}) = \sigma \left( \sum_{i \neq j} \phi(x^{(i)}) \right)
\end{equation}
with $\sigma$ being a polynomial function can fit any multisymmetric polynomial in the variables $x^{(i \neq j)}  \coloneqq \{x^{(1)}, ..., x^{(j - 1)}, x^{(j + 1)}, ..., x^{(n)} \}$ perfectly.
   
   We can therefore represent $q_i$ by
   \begin{equation}
    \psi_i(x^{(j)}, \Sigma) = \sum_{\alpha \in P} (x^{(j)})^\alpha \cdot \sigma_\alpha(\Sigma - \phi(x^{(j)}))
\end{equation}
with $P \subseteq \mathbb{N}_0^d, |P| < \infty$ and $\sigma_\alpha$ are polynomials. 

By setting $\psi = [\psi_1, ..., \psi_d]$, we obtain a Sumformer $\mathcal{S}$ with $\mathcal{S}(x) = [p_1(x), ..., p_d(x)]$ which is able to approximate $f$ sufficiently well.
\end{proof}

\section{Proofs of the universal approximation results for Transformer}

Now we give the detailed proof of the universality of Transformers from Theorem \ref{theorem_transformer}.

\begin{proof}
    We use the triangular inequality to divide the approximation in two steps. We first approximate $f$ by a Sumformer $\Sc$ and then show that the Sumformer can be approximated by a Transformer $\T$, i.e.
    \begin{equation}
        \sup_{X \in \X^n} \|f(X)-\T(X)\|_\infty \leq \sup_{X \in \X^n} \|f(X)-\Sc(X)\|_\infty + \sup_{X \in \X^n} \|\Sc(X)-\T(X)\|_\infty
    \end{equation}
    For the first summand we have from Theorem \ref{theorem_sumformer} that there is a Sumformer $\Sc$ which approximates $f$ to an accuracy of $\e/2$. The Sumformer has the inherent latent dimension $d'$.
    
    We now turn to the second summand and construct a Transformer that is able to approximate the Sumformer to $\e/2$ accuracy. Transformers are constructed as described in Definition \ref{def_transformer}.   
    Because of the structure with $X + \FC(X+\Att(X))$, we can set the attention for the first layers to zero. Thereby, we obtain feed-forward layers without attention. 

    The Transformer is then constructed as follows.
      We have the input $X =[x_1, \ldots, x_n]^\top \in \X^n$ with $x_i \in \R^{1 \times d}$ and map it with a feed-forward from the right to %layer $W=[I_n,I_n]^\top$ and $b=\bm{0}_{2n}$ 
    \begin{equation}
        \begin{bmatrix}
            x_1, x_1 \\
            \cdots \\
            x_n, x_n
        \end{bmatrix} \in \R^{n \times 2d}.
    \end{equation}
    We can then find a two layer feed-forward network such that it acts as the identity on the first $n$ components and approximates the function $\phi$. The approximation with two feed forward layers of $\phi$ is possible because of the universal approximation theorem \cite{hornik_multilayer_1989}. In the discontinuous setting we need more layers to approximate $\phi$. Therefore, after three feed-forward layers we get
    \begin{equation}
            \begin{bmatrix}
            x_1, \phi(x_1) \\
            \cdots \\
            x_n, \phi(x_n)
        \end{bmatrix} \in \R^{n \times (d+d') }.
    \end{equation}
    Before, we get to the attention layer we add one more layer from the right
    $\FC:\R^{d+d'}\rightarrow\R^{1+d+2d'}$ with 
\begin{equation}
    W=\begin{bmatrix}
   \bm{0}_{d\times 1}  &  I_d & \bm{0}_{d\times d'} &  \bm{0}_{d\times d'} \\
   \bm{0}_{d'\times 1} & \bm{0}_{d'\times d} &
I_{d'}
   & \bm{0}_{d'\times d'}
\end{bmatrix}\in\R^{(d+d')\times(1+d+2d')}
\end{equation}
and $b=[\bm{1}_n,0_{n\times(d+2d')}]$. Using these transformations, we get as output after the first step:
\begin{equation}
    X_1 = \begin{bmatrix}
        1 & x_1 & \phi(x_1) & \mathbf{0}_{d'}\\
       & \cdots&& \\
        1 & x_n & \phi(x_n) & \mathbf 0_d
    \end{bmatrix} \in \R^{n \times 1+d+2d' }
\end{equation}
    Note that these steps are the same for the efficient Transformers. 
    
    Now, we turn to the attention head to represent the sum $\Sigma = \sum_{i=1}^n \phi(x_i) \in \R^{d'}$. First we choose $W_Q=W_K=[e_1,\bm{0}_{(1+d+2d')\times (1+d+2d')}]\in\R^{(1+d+2d')\times(1+d+2d')}$ for $e_1=[1,\bm{0}_{d+2d'}]^\top\in\R^{1+d+2d'}$, such that 
    \begin{equation}
        A = \rho((X_1W_Q)(X_1W_K)^\top) = \frac 1 n \mathbf 1_{n \times n}.
    \end{equation}
    The matrix $A$ will then be multiplied with $X_1W_V$. We can choose 
    \begin{equation}
        W_V=\left[\begin{array}{cc}
\bm{0}_{(1+d)\times(1+d+d')} & \bm{0}_{(1+d)\times d'}  \\
\bm{0}_{d'\times(1+d+d')} & n\cdot I_{d'} \\
\bm{0}_{d'\times(1+d+d')} & \bm{0}_{d'\times d'}  
\end{array}\right]\in\R^{(1+d+2d')\times (1+d+2d')}.
    \end{equation}
    The output of this attention layer is
\begin{equation}
    [\bm{0}_{1+d+d'}, \Sigma]^\top.
\end{equation}
Then, we apply a residual connection and obtain
\begin{equation}
    [1,x_i,\phi(x_i), \Sigma]^\top.
\end{equation}
Last, we implement $\psi$. For the discontinuous case, we first compute $q(x_i)$. Then, we map a finite set of values to another finite set of values for which we can use Lemma 7 in \cite{yun_are_2020}. Hence, we need to add another $O(n\left(\frac{1}{\e}\right)^{dn}/n!)$ feed-forward layers for the approximation of $\psi$. In the continuous case this can be avoided because of the continuity of $\psi$, we can approximate it with the universal approximation theorem \cite{hornik_multilayer_1989} with $2$ feed-forward layers.
\end{proof}

\section{Deep Sets}

Sumformers are related to the concept of deep sets introduced in \cite{zaheer_deep_2018}. For the discrete proof we use Theorem \ref{deepsets_discrete}. However, there is also a version for uncountable inputs which we highlight here:

\begin{theorem}[\cite{zaheer_deep_2018}, Theorem 9]
    Assume the elements are from a compact set in $\R^{d}$, i.e. possibly uncountable, and the set size is fixed to $M$. Then any continuous function operating on a set $X$, i.e. $f : \R^{d \times M} \rightarrow \R$ which is permutation invariant to the elements in $X$ can be approximated arbitrarily close in the form of $\psi(\sum_{x \in X} \phi(x))$, for suitable transformations $\phi$ and $\psi$.
\end{theorem}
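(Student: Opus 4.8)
The plan is to follow the same route as the continuous proof of Theorem~\ref{theorem_sumformer}, but in the simpler permutation-\emph{invariant} setting, where the semi-invariant decomposition of Lemma~\ref{lemma_semiinvariance} is unnecessary and one works directly with the symmetric function $f$. Write $K\subset\R^d$ for the compact set from which the $M$ elements are drawn, so that $f$ is a continuous, permutation-invariant function on the compact domain $K^M$.

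First I would reduce the problem to approximation by a \emph{symmetric} polynomial. By the classical Stone--Weierstrass theorem there exists a polynomial $\tilde p:\R^{d\times M}\to\R$ with $\sup_{X\in K^M}|f(X)-\tilde p(X)|<\e$. Because $f$ is invariant under every permutation $\pi\in S_M$, averaging the approximant over the symmetric group,
\begin{equation}
  p:=\frac{1}{M!}\sum_{\pi\in S_M}\tilde p\circ\pi,
\end{equation}
yields a multisymmetric polynomial that still satisfies $\sup_{X\in K^M}|f(X)-p(X)|<\e$, since $f\circ\pi=f$ and the supremum over $K^M$ is unchanged under permuting the coordinate blocks.

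Next I would invoke Theorem~\ref{power_sum_generation}, which guarantees that the multisymmetric power sums of multidegree $0<|\alpha|\le M$ generate the whole algebra of multisymmetric polynomials; hence there is an ordinary polynomial $\sigma$ with $p=\sigma\big(p_{\alpha^{(1)}},\dots,p_{\alpha^{(z)}}\big)$, where $p_\alpha(X)=\sum_{i=1}^M (x^{(i)})^\alpha$ and $x^\alpha=x_1^{\alpha_1}\cdots x_d^{\alpha_d}$. This is exactly the Deep Sets structure: defining $\phi:\R^d\to\R^{d'}$ by $\phi(x)=\big(x^\alpha\big)_{0<|\alpha|\le M}$, with $d'$ the (finite) number of such multidegrees, gives $\sum_{x\in X}\phi(x)=\big(p_\alpha(X)\big)_{0<|\alpha|\le M}$, i.e.\ the full vector of power sums. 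Setting $\psi:=\sigma$ then produces $\psi\big(\sum_{x\in X}\phi(x)\big)=p(X)$, which approximates $f$ to within $\e$, and both $\phi$ and $\psi$ are polynomials and therefore continuous, as required.

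I expect the main obstacle to be the interface between the analytic and the algebraic steps: one must verify that symmetrizing the Stone--Weierstrass approximant over $S_M$ does not enlarge the error (this rests on the invariance of $f$ and on the compactness of $K^M$, which makes the uniform estimate meaningful) and that the degree bound $|\alpha|\le M$ from Theorem~\ref{power_sum_generation} indeed keeps $d'$ finite, so that $\phi$ lands in a finite-dimensional space. Once these two points are secured, the remaining verification that the two polynomial maps compose to reproduce $p$ is routine.
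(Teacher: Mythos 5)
Your proof is correct. Note first that the paper itself does not prove this statement: it is quoted verbatim from Zaheer et al.\ as an external result, and is not even used in the paper's arguments (only the countable-input Theorem~\ref{deepsets_discrete} is). What you have written is essentially the paper's own continuous proof of Theorem~\ref{theorem_sumformer} specialized to the fully invariant, sequence-to-point setting: Stone--Weierstrass, followed by Theorem~\ref{power_sum_generation} to write the symmetric approximant as a polynomial $\sigma$ in the power sums, with $\phi$ the monomial map of all multidegrees $0<|\alpha|\le M$ and $\psi=\sigma$. The simplifications you exploit are exactly the right ones: since the target is invariant rather than equivariant, you can drop Lemma~\ref{lemma_semiinvariance} and the leave-one-out correction $\Sigma-\phi(x^{(j)})$, and your explicit symmetrization $p=\frac{1}{M!}\sum_{\pi}\tilde p\circ\pi$ cleanly supplies the multisymmetric approximant (the error bound survives averaging because $f\circ\pi=f$ and $K^M$ is permutation-invariant). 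Your two flagged obstacles are both nonissues: the averaging argument is a one-line triangle inequality, and $d'=\binom{M+d}{d}-1$ is finite. It is worth observing that your route differs from the original Deep Sets proof of Theorem~9, which argues via a homeomorphism property of the scalar power-sum map and does not straightforwardly cover $d>1$; the multisymmetric-power-sum argument you give (following Briand and the construction in this paper) is the standard way to repair that, and is the route taken by the concurrent work \cite{chen_representation_2023} cited in Section~\ref{Sec:RelatedWorks}.
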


The fundamental differences of the previous theorem to our work are that we consider \emph{equivariant}, continuous sequence-to-\emph{sequence} functions. This difference is the reason why we need a second parameter in $\phi$.

%Relate to related work section and consider what might fit better into appendix.

\begin{comment}
   What are deep sets
discretized and continous model
Discussion of d' in appendix
(no multi-symmetric sums here) 
\end{comment}

%\section{Background on multi-symmetric functions}

\end{document}